\def\captionstyle{}
\def\boxcaptionstyle{\raggedright}
\def\maxfigfraction{.6}
\newdimen\figboxmargin
\newdimen\figboxhang
\def\DVIscaling{1}
\def\globalscaling{1}
\def\figuredirectory{./figures}
\let\boxer=\llboxer
\def\missingfigure#1{\hbox{Missing figure #1.ps}}
\newbox\figurebox
\def\figbox{
\@ifnextchar[{\figboxaux}{\figboxaux[htb]}}
\long\def\figboxaux[#1#2]#3#4#5#6{
\writepict{{#3}{#4}{#5}{#6}}
\setbox\figurebox\hbox{#3}%
\if l#1\tryleftbox{#4}{#5}{#6}%
\else
\if r#1\tryrightbox{#4}{#5}{#6}%
\else
\if *#1\checktwocoloptions#2]{\box\figurebox}{#4*}{#5}{#6}%
\else\tryonecol[#1#2]{#4}{#5}{#6}%
\fi
\fi
\fi\ignorespaces}
\long\def\tryleftbox#1#2#3{
\ifdim\wd\figurebox>\maxfigfraction\columnwidth \tryonecol[htb]{#1}{#2}{#3}%
\else\leftbox{\captionbox{\box\figurebox}{#1}{#2}{#3}}\fi}
\long\def\tryrightbox#1#2#3{
\ifdim\wd\figurebox>\maxfigfraction\columnwidth \tryonecol[htb]{#1}{#2}{#3}%
\else\rightbox{\captionbox{\box\figurebox}{#1}{#2}{#3}}\fi}
\def\checktwocoloptions{
\@ifnextchar]{\floatbox[htb}{\floatbox[}}
\long\def\tryonecol[#1]#2#3#4{
\ifdim\wd\figurebox>\columnwidth \floatbox[#1]{\box\figurebox}{#2*}{#3}{#4}%
\else\floatbox[#1]{\box\figurebox}{#2}{#3}{#4}\fi}
\long\def\floatbox[#1]#2#3#4#5{%
\begin{#3}[#1]
\hbox to \hsize{\hfil#2\hfil}
\captionandlabel{#3}{#4}{#5}
\end{#3}
}
\long\def\captionbox#1#2#3#4{
\setbox\figurebox\hbox{#1}%
\parbox[t]{\wd\figurebox}{%
\bigskip\box\figurebox
\let\captionstyle=\boxcaptionstyle
\captionandlabel{#2}{#3}{#4}
\bigskip
}}
\def\captionandlabel#1#2#3{
\def\testit{#3}%
\ifx\testit\empty\else
\writecapt{{#1}{#2}{#3}}
\captypeunstarred#1*.
\getcaption#3\endc@ption
\def\testit{#2}
\ifx\testit\empty\else\label{#2}\fi
\fi}
\def\captypeunstarred#1*#2.{
\def\@captype{#1}}
\def\getcaption{\@ifnextchar[{\getcaptwo}{\getcapone}}
\long\def\getcapone#1\endc@ption{\caption[#1]%
{\def\baselinestretch{1}\Large\normalsize\captionstyle\ignorespaces #1}}
\long\def\getcaptwo[#1]#2\endc@ption{\caption[#1]%
{\def\baselinestretch{1}\Large\normalsize\captionstyle\ignorespaces #2}}
\newdimen\figboxht
\newdimen\figboxwid
\newif\ifisleftbox
\long\def\leftbox#1{%
\setbox\figurebox\hbox{#1}\global\isleftboxtrue
\startmarginbox
\vadjust{\smash{\rlap{\hskip\hsize\hskip\figboxhang
\llap{\raise.7\baselineskip\box\figurebox\hskip\rightskip}}}}%
\endmarginbox%
}
\long\def\rightbox#1{%
\setbox\figurebox\hbox{#1}\global\isleftboxfalse
\startmarginbox
\smash{\llap{\raise.7\baselineskip\box\figurebox\hskip\figboxmargin}}%
\endmarginbox%
}
\def\startmarginbox{%
\ifvmode\passpict\let\endmarginbox=\indent
\else\message{WARNING: marginbox in not in vmode}\hfilneg\ \passpict
\let\endmarginbox=\relax\fi
\figboxht=\dp\figurebox
\advance\figboxht by 1.3\baselineskip
\vskip.95\figboxht\penalty-300\vskip-.95\figboxht
\divide\figboxht by\baselineskip
\global\figboxlines=\figboxht
\global\figboxwid=\wd\figurebox
\global\advance\figboxwid by \figboxmargin
\global\advance\figboxwid by -\figboxhang
\setmypar\noindent}
\def\addlines#1{\global\advance\figboxlines by #1\myparshape}
\def\zerolines{\origpar\global\figboxlines=0\myparshape}
\def\passpict{\par\ifnum\figboxlines>1\vskip\figboxlines\baselineskip
\zerolines\fi}
\def\emptybox#1#2{\hbox to #1{\vbox to #2{\vss}\hss}}
\global\let\origpar=\@@par
\global\let\dopar=\origpar
\global\def\@@par{\dopar}
\def\setmypar{\global\let\dopar=\mypar
\global\prevgraf=0\myparshape}
\def\mypar{\origpar\global\advance\figboxlines by -\prevgraf%
\global\prevgraf=0\myparshape}
\def\myparshape{\relax%
\ifnum\figboxlines>1\theparshape \else
\global\hangindent=0pt\global\hangafter=1
\global\let\dopar=\origpar\fi}
\def\theparshape{%
\ifisleftbox\global\hangindent=-\figboxwid 
\else\global\hangindent=\figboxwid \fi
\global\hangafter=-\figboxlines \global\advance\hangafter by 1%
}
\def\definefnum#1{
\def\fnum@figure{Figure \ref{#1}}%
\def\fnum@table{Table \ref{#1}}%
\def\fnum@code{Algorithm \ref{#1}}%
}
\def\writepict#1{}
\def\writecapt#1{}
\def\journalpicts#1{
\newwrite\pictfile
\newwrite\captfile
\openout\pictfile\jobname.pic
\openout\captfile\jobname.cap
\gdef\writepict##1{\unexpandedwrite\pictfile{\doit##1}}%
\gdef\writecapt##1{\unexpandedwrite\captfile{\doit##1}}%
\global\let\ENDdocument=\enddocument
\gdef\enddocument{\DOjournalpicts{#1}\ENDdocument}
}
\def\DOjournalpicts#1{{%
\def\writepict##1{}\closeout\pictfile
\def\writecapt##1{}\closeout\captfile
\@fileswfalse
\onecolumn
\def\globalscaling{#1}
\def\doit##1##2##3##4{
\figboxaux[t]{\hss##1\hss}{##2}{}{}%
\vspace*{1in}
\definefnum{##3}
\captionandlabel{##2}{}{##4}
\clearpage}%
\input\jobname.pic
\def\doit##1##2##3{
\definefnum{##2}
\captionandlabel{##1}{}{##3}}%
\raggedright\let\captionstyle=\raggedright
\def\@makecaption##1##2{##1: ##2\par}
\input\jobname.cap
}}
\def\llboxer#1{\vbox to \figboxht{\vfil\hbox to \figboxwid{#1\hfill}}}
\def\lcboxer#1{\vbox to \figboxht{\vfil\hbox to \figboxwid{\hfill#1\hfill}}}
\def\oldboxer#1{\vbox to \figboxht{\vfil
                      \hbox to \figboxwid{\hfill\llap{#1\hskip4.25in}\hfill}}}
\def\ulboxer#1{\vbox to \figboxht{\hbox to \figboxwid{#1\hfill}\vfil}}
\def\ccboxer#1{\vbox to \figboxht{\vfil
                        \hbox to \figboxwid{\hfill#1\hfill}\vfil}}
\gdef\removedimen#1pt{#1}}
\def\defscaled#1#2{#2=\DVIscaling#2%
\xdef#1{\expandafter\removedimen\the#2}}
\def\DVIspace{ }
\newdimen\hscalefactor
\newdimen\vscalefactor
\def\scale#1{\horizscale{#1}\vertscale{#1}}
\def\horizscale#1{\hscalefactor=#1\hscalefactor\figboxht=#1\figboxht}
\def\vertscale#1{\vscalefactor=#1\vscalefactor\figboxwid=#1\figboxwid}
\def\boxps{%
\@ifnextchar[{\boxpsaux}{\boxpsaux[\relax]}}
\def\boxpsaux[#1]#2#3#4#5{%
{\figboxwid#4\figboxht#5\hscalefactor=1pt\vscalefactor=1pt%
\scale{#3}%
\scale{\globalscaling}%
#1%
\defscaled\DVIhscale\hscalefactor
\defscaled\DVIvscale\vscalefactor
\boxer{\includegraphics{\figpsfilename\DVIspace}}}%
}
\newread\Epsffilein
\newif\ifEpsffileok
\newif\ifEpsfbbfound
\newdimen\pspoints
\def\boxeps{%
\@ifnextchar[{\boxepsaux}{\boxepsaux[\relax]}}
\def\boxepsaux[#1]#2#3{%
%
%
\gdef\figpsfilename{\figuredirectory/#2.ps}
\openin\Epsffilein=\figuredirectory/#2.ps
\ifeof\Epsffilein 
\gdef\figpsfilename{\figuredirectory/#2.eps}
\openin\Epsffilein=\figuredirectory/#2.eps \fi
\ifeof\Epsffilein\message{I couldn't open \figuredirectory/#2.ps or \figpsfilename}%
\missingfigure{#2}
\else
%
%
   {\Epsffileoktrue\Epsfbbfoundfalse
    \catcode`\%=11 \catcode`\\=11
    \catcode`\{=11 \catcode`\}=11
    \catcode`\$=11 \catcode`\^=11
    \catcode`\&=11 \catcode`\#=11
    \catcode`\~=11 \catcode`\_=11
    \loop
       \read\Epsffilein to \Epsffileline
       \ifeof\Epsffilein\Epsffileokfalse\else
%
%
          \expandafter\Epsfaux\Epsffileline . .\\%
       \fi
   \ifEpsffileok\repeat
   \ifEpsfbbfound
        \figboxht=\Epsfury\pspoints
        \advance\figboxht by-\Epsflly\pspoints
        \figboxwid=\Epsfurx\pspoints
        \advance\figboxwid by-\Epsfllx\pspoints
   \else
        \message{No bounding box comment in \figpsfilename }%
        \figboxwid=2in\figboxht=1in%
   \fi%
   \immediate\closein\Epsffilein
   \hscalefactor=1pt\vscalefactor=1pt%
   \scale{#3}%
   \scale{\globalscaling}%
   #1%
   \defscaled\DVIhscale\hscalefactor
   \defscaled\DVIvscale\vscalefactor
   \hscalefactor=-\Epsfllx\hscalefactor
   \hscalefactor=1.00375\hscalefactor
   \defscaled\DVIhoffset\hscalefactor
   \vscalefactor=-\Epsflly\vscalefactor
   \vscalefactor=1.00375\vscalefactor
   \defscaled\DVIvoffset\vscalefactor
   \llboxer{\includegraphics{\figpsfilename\DVIspace}} }%
\fi
}%
\global\let\Epsfpar=\par
\global\let\Epsfpercent=
\long\def\Epsfaux#1#2 #3\\{\relax\ifx#1\Epsfpercent
   \def\testit{#2}\ifx\testit\Epsfbblit
      \Epsfsize #3 . . . .\\%
      \global\Epsffileokfalse
      \global\Epsfbbfoundtrue
   \fi\else\ifx#1\Epsfpar\else\global\Epsffileokfalse\fi\fi}%
\def\Epsfsize#1 #2 #3 #4 #5\\{\global\def\Epsfllx{#1}\global\def\Epsflly{#2}%
   \global\def\Epsfurx{#3}\global\def\Epsfury{#4}}%
\def\pic#1;#2;#3;#4\par{\picsc#1;#2;#3;1;#4\par}
\def\picsc#1;#2;#3;#4;#5\par{
\figbox[htb]{\boxeps{#1}{#4}
}{figure}{#1}{%
#5}}
\def\mpic#1;#2;#3;#4\par{\mpicsc#1;#2;#3;1;#4\par}
\def\mpicsc#1;#2;#3;#4;#5\par{
\figbox[l]{\boxeps{#1}{#4}
}{figure}{#1}{%
#5}}
\newtheorem{theorem}{Theorem}
\newtheorem{lemma}[theorem]{Lemma}
\newcommand{\old}[1]{{}}
\def\tomath#1{\relax\ifmmode#1\else$#1$\fi}
\long\def\comm#1{\ignorespaces}
\def\comments{\long\def\comm##1{\message{COMMENT: ##1}{\bf(( ##1 ))}}}
\newcommand{\figwid}{0.22\columnwidth}
\title{Particle Computation:\\ Designing Worlds to Control Robot Swarms with only Global Signals}
\date{}
\author{
  Aaron Becker%
    \thanks{Department of Computer Science, Rice University, Houston, TX 77005,
     \protect\url{aabecker@gmail.com}, 
\protect\url{jm23@rice.edu}.}
\and
  Erik D. Demaine%
    \thanks{Computer Science and Artificial Intelligence Laboratory, MIT,
     Cambridge, MA 02139, USA,
      \protect\url{edemaine@mit.edu}.}
\and
  S\'andor P. Fekete%
    \thanks{Dept.~of Computer Science,
      TU Braunschweig,
      M\"uhlenpfordtstr.~23, 38106 Braunschweig, Germany,
      \protect\url{s.fekete@tu-bs.de}}
\and
 James McLurkin%
}
\begin{document}
\thispagestyle{empty}
\maketitle
\begin{abstract}
Micro- and nanorobots are often controlled by global input signals, such as an electromagnetic or gravitational field. These fields move each robot maximally until it hits a stationary obstacle or another stationary robot.  This paper investigates 2D motion-planning complexity for  large swarms of simple mobile robots (such as bacteria, sensors, or smart building material). 

In previous work we proved it is NP-hard to decide whether a given initial configuration can be transformed into a desired target configuration; in this paper we prove a stronger result: the problem of finding an optimal control sequence is PSPACE-complete. On the positive side, we show we can build useful systems by designing obstacles.  We present a reconfigurable hardware platform and demonstrate how to form arbitrary permutations and build a compact absolute encoder.  We then take the same platform and use \emph{dual-rail logic} to build a universal logic gate that concurrently evaluates AND, NAND, NOR and OR operations.  Using many of these gates and appropriate interconnects we can evaluate any logical expression.

\end{abstract}

\section{Introduction}\label{sec:Intro}

Milli-, micro-, and nanorobots are capable of entering environments too small for their larger cousins.   Swarms of these tiny robots may be ideal for targeted drug delivery, on-site micro construction, and minimally invasive surgery.  An untethered swarm could reach areas deep in the body that traditional,  robots and tooling cannot.    These swarms are often controlled by an external, global electromagnetic field~\cite{Chanu2008,Khalil2013b,Lanauze2013}. Motion planning for large robotic populations actuated by the same field in a tortuous environment is difficult.

We investigate the following basic problem: {\it Given a map of an environment, such as the vascular network shown in Fig.~\ref{fig:vascularNetwork}, along with initial and goal positions for each robot, does there exist a sequence of inputs that will bring each robot to its goal position?}
 In previous work \cite{Becker2014a}, it was shown that this problem is at
least NP-hard, by reduction to a 3SAT problem.  In this paper we improve the
analysis and show the problem is  PSPACE-complete.   This complexity result has
some benefits: we show that we can design artificial environments capable of
computation, and describe configurations of obstacles that result in useful
robotic systems: absolute encoders, Boolean logic as shown in Fig.~\ref{fig:ParticleLogic11}, and planar displays.

\begin{figure}
\begin{overpic}[height=2.75in]{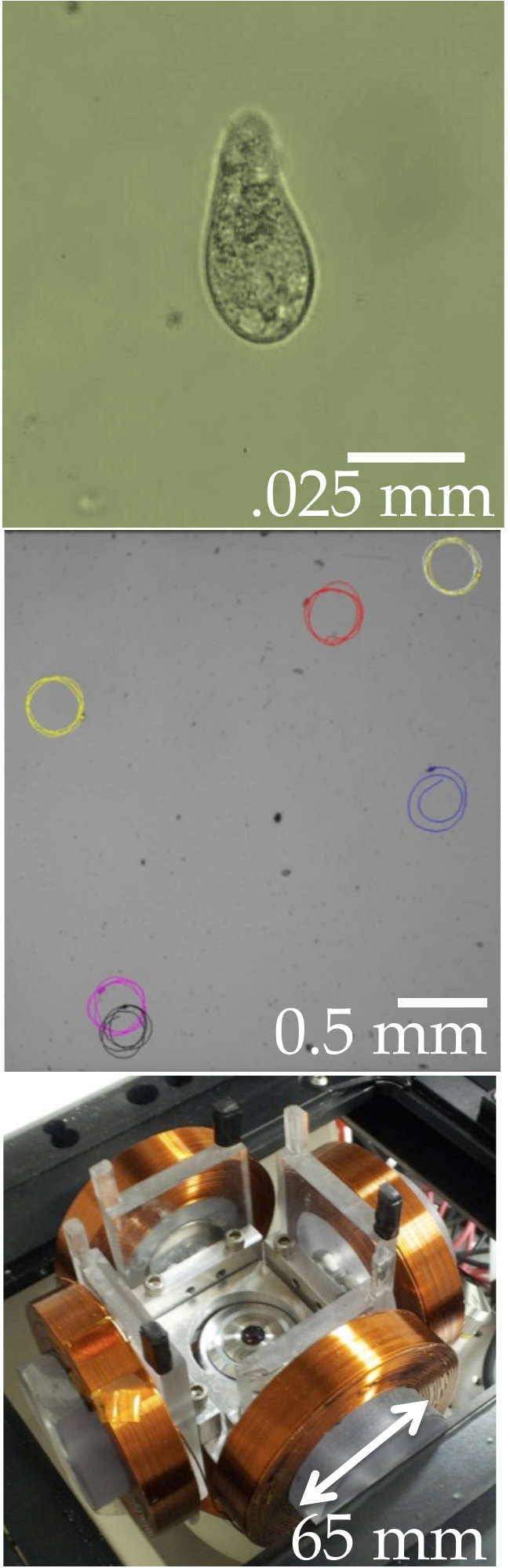}
\end{overpic}
\begin{overpic}[height=2.75in, angle = 0]{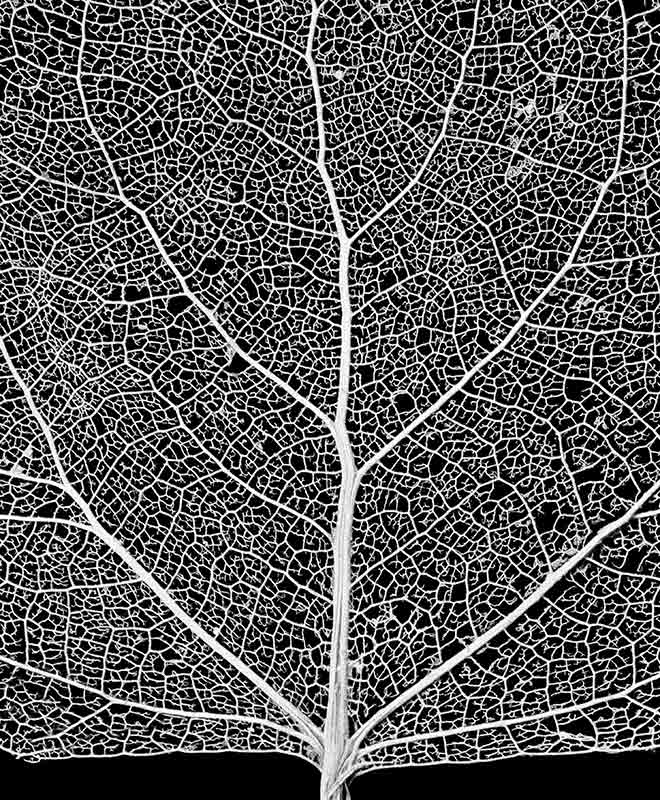}\end{overpic}
\caption{\label{fig:vascularNetwork}(Left) State of the art in controlling small objects by force fields: after feeding iron particles to \emph{T. pyriformis} cells and magnetizing the particles with a permanent magnet, the cells are mobile robots that can be turned by changing the orientation of an external magnetic field~\cite{Becker2013a}.  All cells are steered by the same global field.
\href{http://www.mathworks.com/matlabcentral/fileexchange/42890-simulate-control-of-magnetized-tetrahymena-pyriformis-cells}{(Right) A complex vascular network,
forming a typical environment for the parallel control of small robots.   Given such a network along with initial and goal positions of $N$ robots,  is it possible to bring each robot to its goal position using a global control signal?\hspace{\textwidth}
 (Right image credit: Royce Bair/Flikr/Getty Images)}
}
\vspace{-1em}
\end{figure}

\begin{figure}
\begin{overpic}[width =\columnwidth]{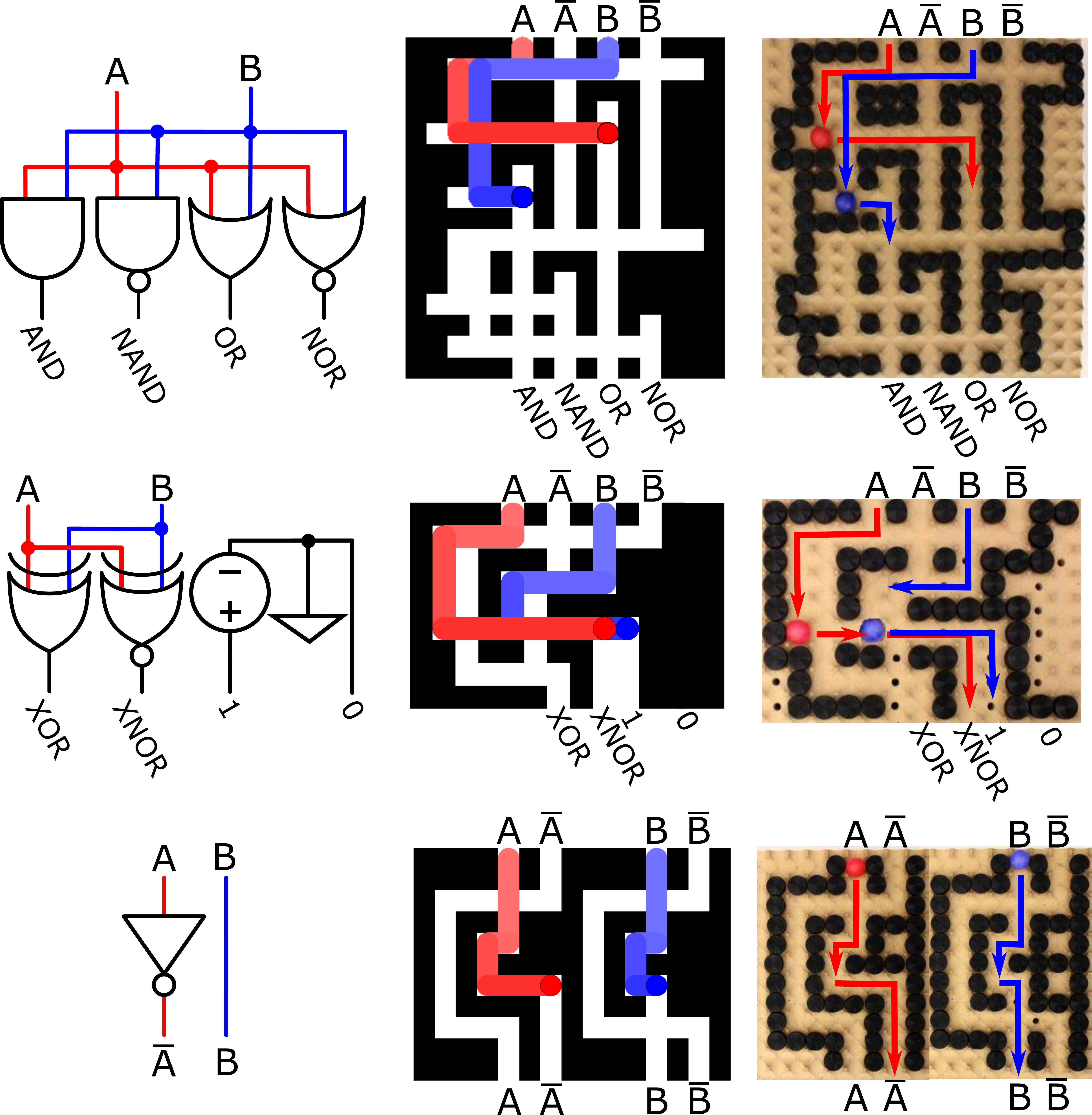}\end{overpic}
\caption{\label{fig:ParticleLogic11}Schematic, diagram, and physical implementation of  dual-rail logic gates. Each gate employs the same clock sequence $\langle d,l,d,r \rangle$, the four inputs correspond to $A,\bar{A},B,\bar{B}$, and the inputs are [1,1]. The top row is a universal logic gate whose  four outputs are {\sc and, nand, or, nor}.  With input [1,1]  the  {\sc and} and {\sc or} outputs are set high. The middle row gate outputs the {\sc xor, xnor} of the inputs and constants 1 and 0.  The bottom row is a {\sc not} gate and a connector. \href{http://youtu.be/mJWl-Pgfos0}{See the attached video at \url{http://youtu.be/mJWl-Pgfos0} for a hardware demonstration.}
}
\vspace{-1em}
\end{figure}

 We study this problem on a two-dimensional grid.  We assume that robots cannot be individually
controlled, but are all simultaneously given a message to travel
 in a given direction until they collide with an obstacle or another robot. 
 This assumption corresponds to
situations with limited-state feedback, or for robots that move at
unpredictable speeds.
Problems of this type are similar to sliding-block puzzles with fixed obstacles
\cite{Demaine2000,Hoffmann2000,Robert-A.-Hearn2002,Holzer2004},
except that all robots receive the same control inputs.


\subsection{Problem Definition}\label{subsec:GeneralProblemDefinition}
More precisely, we consider the following scenario, which we call {\sc GlobalControl-ManyRobots}:
    \begin{enumerate}
\item Initially, the planar square grid is filled with some unit-square robots (each occupying one cell of the grid)  and some fixed unit-square blocks.
\item All robots are commanded in unison: the valid commands are  ``Go Up" ($u$), ``Go Right" ($r$), ``Go Down" ($d$), or ``Go Left" ($l$).  The robots all move in the commanded direction until they hit an obstacle or another robot.  A representative command sequence is $\langle u,r,d,l,d,r,u,\ldots\rangle$. We call these global commands \emph{force-field moves}. We assume we know the maximum dimension of the workspace and issue each command long enough for the robots to reach their maximum extent.
\item The goal is to get
each robot to its specified position.
\end{enumerate}
   The algorithmic decision problem {\sc GlobalControl-ManyRobots}  is to decide whether a given configuration is solvable.
This problem is computationally difficult: we prove PSPACE-completeness in Section~\ref{sec:pspace}. While this result shows
the richness of our model (despite the limited control over the individual parts), it also constitutes
a major impediment for constructive algorithmic work.

This makes developing algorithmic tools that enable global control by uniform commands important. In
Sections~\ref{sec:matrixPermutation} and \ref{sec:logic}, we develop several positive results. The underlying idea is to construct artificial
obstacles (such as walls) that allow arbitrary rearrangements of a given two-dimensional robot swarm.

 Our paper is organized as follows.  After a discussion of related work in Section~\ref{sec:RelatedWork}, we describe how to arrange obstacles to encode matrix permutations in Section~\ref{sec:matrixPermutation}. This result allows us to create useful devices including absolute encoders and matrix displays.  Arbitrary matrix permutations also provides the machinery needed for our result on the problem complexity in Section~\ref{sec:pspace}. In Section \ref{sec:logic} we describe how to implement Boolean algebra, which is enabled by using dual-rail logic. We present our hardware implementation for both matrix permutations and Boolean algebra in Section~\ref{sec:hardware}, and end with concluding remarks in Section~\ref{sec:conclusions}. \href{http://www.mathworks.com/matlabcentral/fileexchange/42892}{All code is available online}~\cite{Becker2013i,Becker2014d}.

%
%
%


\section{Related Work}\label{sec:RelatedWork}

One recent development is the ability to design, produce, and control robots at the micro and nanoscale.
These mobile robots allow a wide range
of possible applications, e.g., targeted drug delivery, micro and nanoscale construction, and Lab-on-a-Chip test devices.
Because (1) the physics of motion at low Reynold's number nanoscale environments requires overcoming a considerable amount of resistance,
and (2) capacity to store energy for computation, communication and motion control shrinks
with the third power of object size, it is clear that classical approaches based on individual motion control cannot be applied.

Instead of individual actuation, a global field is used to control many small agents. An example
is using the global magnetic field from an MRI to guide magneto-tactic bacteria
through a vascular network to deliver payloads at specific locations
\cite{Chanu2008}, and recent work using electromagnets to steer a
magneto-tactic bacterium through a micro-fabricated maze \cite{Khalil2013b}.

\subsubsection{Large Robot Populations}
Due to the efforts of roboticists,  biologists,  and chemists  (e.g. \cite{Rubenstein2012,Ou2013,Chiang2011}),
it is now possible to make and field very large ($10^3$--$10^{14}$) populations of simple robots.   With large populations come two fundamental challenges: (1) how to perform state estimation for the robots, and (2) how to control these robots.

Traditional approaches often assume independent control signals for each robot, but each additional independent signal requires bandwidth and engineering. These bandwidth requirements grow at $O(n)$.
Using independent signals becomes more challenging as the robot size decreases. At the molecular scale, there is a bounded number of modifications that can be made.
  Especially at the micro- and nanoscales it is not practical to encode autonomy in the robots.  Instead, the robots are controlled and interacted with using global control signals.

More recently, robots have been constructed with physical heterogeneity so that they respond differently to a global, broadcast control signal.  Examples include \emph{scratch-drive microrobots}, actuated and controlled by a DC voltage signal from a substrate \cite{Donald2013};   magnetic structures  with different cross-sections that could be independently steered \cite{Floyd2011};   \emph{MagMite} micro-robots with different resonant frequencies and a global magnetic field \cite{Frutiger2008}; and  magnetically controlled nanoscale helical screws constructed to stop movement at different cutoff frequencies of a global magnetic field
\cite{Peyer2013}.

This paper takes a different approach.  We assume a population of approximately identical planar robots (which could be small particles) and  one global control signal that contains the direction all robots should move.  In an open environment, this system is not controllable because the robots move uniformly---implementing any control signal translates the entire group identically.  However, an obstacle-filled workspace allows us to break symmetry. We showed that if we can command the robots to move one unit distance at a time, some goal configurations have easy solutions~\cite{Becker2013b}. Given a large free space, we have an algorithm showing that a single obstacle is sufficient for position control of $N$ robots (video of position control: \url{http://youtu.be/5p_XIad5-Cw}).  However, this result required incremental position control of the group of robots, i.e. the ability to advance them a uniform fixed distance.  This is a strong assumption, and one that we relax in this work.

\subsubsection{Computational Particles}
Amorphous computing \cite{Abelson2007} studies how \emph{computational particles} distributed on a surface can be used to produce computational engines. In a similar manner, we show how to construct logic gates to perform computation in our system, when activated by a global signal.

Another related area of research is Single Instruction Multiple Data (SIMD)
parallel algorithms \cite{Leighton1991}.  In this model, multiple processors
are all fed the same instructions to execute, but they do so on different
data.  This model has some flexibility, for example allowing command execution selectively only on
certain processors and no operations (NOPs) on the remaining processors. 

Our model is actually more extreme: the robots all respond in effectively
the same way to the same instruction.  The only difference is their location,
and which obstacles or robots will thus block them.  In some sense,
our model is essentially Single Instruction, Single Data, Multiple Location.




\subsubsection{Computational Geometry: Robot Box-Pushing}
Many variations of block-pushing puzzles have been explored from a computational complexity viewpoint, with a seminal paper proving NP-hardness by Gordon Wilfong in 1991~\cite{Wilfong1991}.
 The general case of motion-planning when each command moves robots a single unit in a world composed of even a single robot and both \emph{fixed} and \emph{moveable} squares is in the complexity class  PSPACE-complete~\cite{Dor1999}.

The ``move to maximal extent'' motion model we employ is motivated by physical realities where, due to uncertainties in sensing, control application, and robot models, precise quantified movements in a specified direction is not possible, but the input can be applied for a long period of time and be guaranteed that the robots will move to their fullest extent. Lewis uses this model to reduce uncertainty in state estimation\cite{Lewis01092013}.  Maximal extent movement is common in games, including \emph{Ricochet Robots} \cite{Engels2005}, \emph{Atomix} \cite{Holzer2004}, and \emph{PushPush} \cite{Demaine2000}.  In these games the robots move to their full extent with each input, but each robot can be actuated individually.  The complexity of the problem with global inputs to all robots has remained an open problem.



\section{Matrix Permutations}\label{sec:matrixPermutation}

This section investigates a construction problem. Given the {\sc GlobalControl-ManyRobots}  constraints in \ref{subsec:GeneralProblemDefinition},
 what types of control are possible and economical if we are free to design the environment?

First, we describe an arrangement of obstacles that implement an arbitrary
matrix permutation in four commands.  Then we provide efficient algorithms for
sorting matrices, and finish with potential applications.

\subsection{Designing Workspace for a Single Permutation}

A \emph{matrix} is a 2D array of robots
(each possibly a different color).
For an $a_r \times a_c$ matrix $A$ and a $b_r \times b_c$ matrix $B$,
of equal total size $N$,
a \emph{matrix permutation} assigns each element in $A$
a unique position in~$B$.
Figs.~\ref{fig:MatrixPermuteAI} and \ref{fig:PermutationCycleLengthData}
show constructions that execute matrix permutations of
 size $N=15$ and $100$, respectively.
For simplicity of exposition, we assume henceforth that all matrices are
$n \times n$ squares.

\begin{figure}
\centering
\href{http://www.youtube.com/watch?v=3tJdRrNShXM}{
\begin{overpic}[width=\columnwidth]{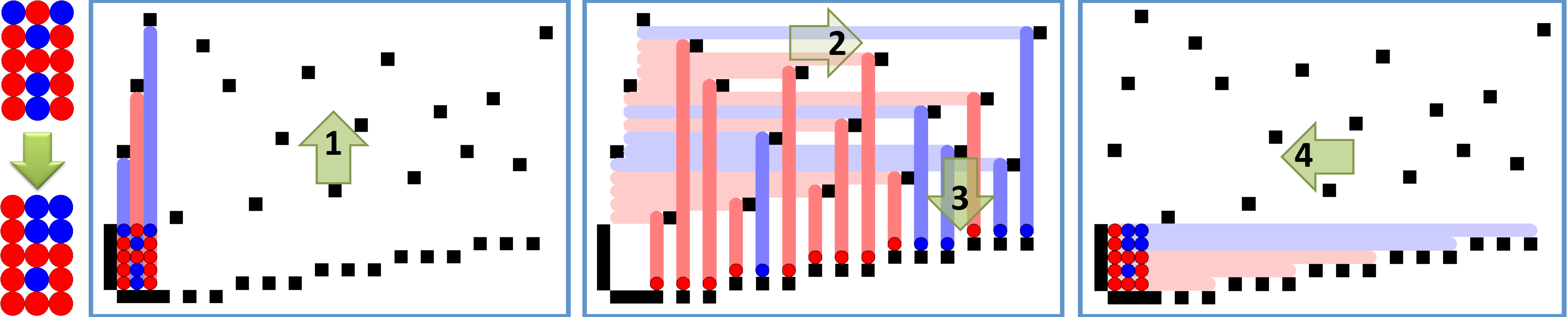}
\end{overpic}}\\
\vspace{1em}
\href{http://www.youtube.com/watch?v=3tJdRrNShXM}{
\begin{overpic}[width=\columnwidth]{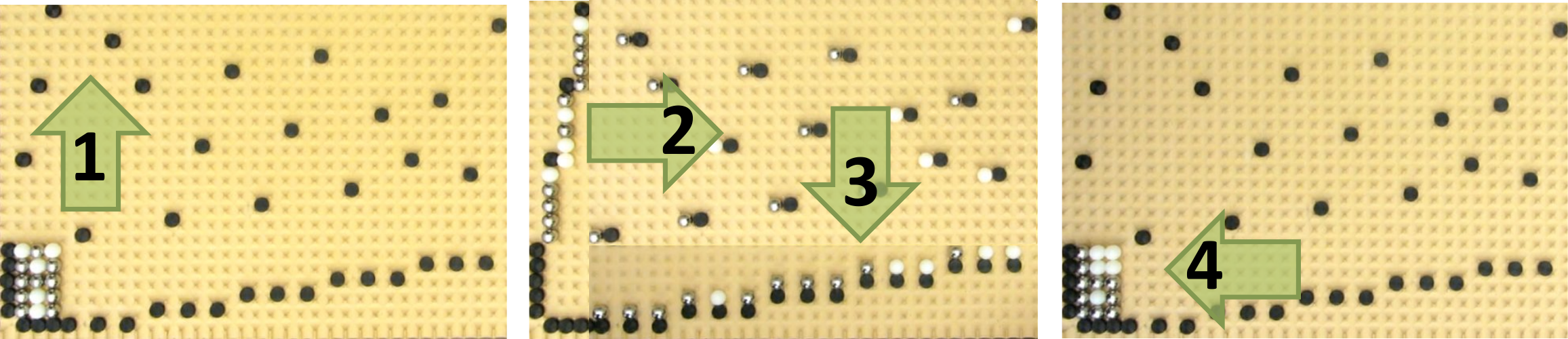}
\end{overpic}}
\caption{\label{fig:MatrixPermuteAI}\href{http://youtu.be/3tJdRrNShXM}{(Top) Matrix permutation for $N$=15. Black cells are obstacles, white cells are free, and colored discs are individual robots. The world has been designed to permute the robots between `A' into `b' every four steps: $\langle u,r,d,l \rangle$. (1) The staggered obstacles on the left spread the matrix vertically, (2) the scattered obstacles on the right permute each element, and (3) the staggered obstacles along the bottom reform each row, which are collected by  (4).
(bottom) Hardware demonstration of a reconfigurable, gravity-fed manipulator that can rearrange (permute) arrays of colored spheres. The demonstration converts `A' to `b', but can be reprogrammed by switching the black stoppers to enable any array permutation.
See video at \url{http://youtu.be/mJWl-Pgfos0}.
}}
\vspace{-1em}
\end{figure}

\begin{theorem} \label{thm:ArbPermutationUsingObstacles}
  Any matrix permutation can be executed by a set of obstacles
  that transforms matrix $A$ into matrix $B$ in just four moves.
  For $N$ robots, the arrangement requires $(3N+1)^2$ space, $4N+1$ obstacles,
  and $10N/v$ time, where $v$ is robot speed in units/s.
\end{theorem}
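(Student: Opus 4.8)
The plan is to construct the obstacle field so that the fixed four-move sequence $\langle u,r,d,l\rangle$ of \fref{MatrixPermuteAI} carries out the permutation in two conceptual halves: a \emph{spread-and-route} half ($u$ then $r$) that separates the robots and sends each one independently to an intermediate cell, and a \emph{gather} half ($d$ then $l$) that recompacts the routed robots into matrix~$B$. The first thing I would prove is a separation invariant: after the $u$ move the staircase of obstacles leaves every robot alone in its own row. Concretely, the robots of each original column stack against a column-specific obstacle, and offsetting consecutive stacks by at least $n$ guarantees that the $N$ occupied rows are pairwise distinct. This invariant is the engine of the construction: once no two robots share a row, the following $r$ move advances every robot independently, so a single obstacle placed in each row can stop its robot at any prescribed column without disturbing the others.

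The heart of the argument is choosing the intermediate columns (the stopping points of the $r$ move) so that the two gather moves reassemble precisely $B$. I would assign the robot destined for cell $(i',j')$ of $B$ an intermediate column that encodes both target coordinates at once: robots sharing a target row $i'$ go to distinct columns in the correct left-to-right order, while columns are interleaved across different target rows so that the downward $d$ move sorts the robots into their rows. I then verify the gather half by tracking an arbitrary robot: the $d$ move lets each column stack onto a bottom obstacle placed so that robot $(i',j')$ comes to rest in row $i'$ (this is the ``reform each row'' step), and the final $l$ move presses each reconstructed row leftward against the wall, squeezing out the gaps so that robot $(i',j')$ lands in column $j'$. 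Composing the four transitions shows every robot reaches its target, which proves correctness.

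I expect the main obstacle to be exactly this intermediate-column assignment, because the gather moves must \emph{create} collisions---that is how a dispersed set is compacted---so I have to certify that collisions occur in precisely the intended order and nowhere else. Two non-interference claims must hold simultaneously: during $u$ and $r$ no two robots may ever enter the same lane (otherwise routing is no longer independent), while during $d$ and $l$ the only robots permitted to meet are those being merged into a common column or row, and they must meet in the order dictated by $B$. Ruling out every spurious collision is what forces empty buffer lanes between occupied ones, and hence drives the space bound.

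Finally I would read off the resource bounds. Each move lasts as long as the farthest distance any robot travels, divided by $v$; since the whole process stays inside a square of side about $3N$ (the $N$ occupied lanes together with the buffer lanes and obstacle columns), the four move-distances sum to at most $10N$, giving running time $10N/v$ and a workspace of $(3N+1)^2$ cells. Summing the per-phase obstacle counts---the staircases driving the $u$ and $d$ moves together with the at-most-one stopper per robot in the $r$ and $l$ moves---and adding a single bounding cell yields the stated $4N+1$; the exact bookkeeping is routine.
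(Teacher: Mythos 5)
Your proposal matches the paper's construction essentially step for step: the staircase obstacles that spread the matrix into distinct rows under $u$, the per-row stoppers whose horizontal positions encode both target coordinates during $r$, the $d$ move that reforms the target rows in staggered fashion, and the final $l$ compaction against a wall, with buffer lanes (the paper's ``empty column between each obstacle'') preventing spurious collisions. The only deviations are trivial bookkeeping (e.g., which moves consume $n$ versus $N$ obstacles), so this is the same proof.
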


\begin{proof}
Refer to Figure~\ref{fig:MatrixPermuteAI}
for an example. \href{http://www.mathworks.com/matlabcentral/fileexchange/45538}{{\sc Matlab} code implementing this is available at \cite{Becker2014d}}.  The move sequence is $\langle u,r,d,l \rangle$. We identify the bottom left workspace square as (0,0),  place the bottom-left robot at (1,1), and label the starting configuration $A$ from 1 to $N$ bottom-to-top, left-to-right.  We also assign these indices to the corresponding entries in $B$.

{\bf (Move~1) 
for $i = 1$ to $n$, place an obstacle at ($i,1+n\cdot (i+1)$):}
 We place $n$ obstacles, one for each column, spaced vertically $n$ units apart, such that moving $u$ spreads the robot array into a staggered vertical line. Each robot now has its own row, and are arranged index $1$ to $N$ from bottom to top.
 
{\bf (Move~2)
 for $i = 1$ to $N$, let $[b_r,b_c]$ be the row and column for robot $i$ in $B$. Place an obstacle at ($2(n\cdot b_r + b_c)-(n+1) , n+2i$):}
We place $N$ obstacles to stop each robot during the move~$r$.   Each robot has its own row and can be stopped at any column by its obstacle. We leave an empty column between each obstacle to prevent collisions during the next move.

{\bf (Move~3)
   for $i = 1$ to $N$, place an obstacle at $(n+ 2i-1, \lfloor \frac{i-1}{n} \rfloor )$:}.
 Moving $d$ arranges the robots into their desired rows.  These rows are spread in a staggered horizontal line.
 
{(\bf Move~4)
for $i = 1$ to $n$, place an obstacle at $(0, i)$:}
Moving $l$ stacks the staggered rows into the desired permutation, and returns the array to the initial position.\end{proof}

By reapplying the same permutation enough times, we can return to the original configuration.  The permutation shown in Fig.~\ref{fig:MatrixPermuteAI} returns to the original image in 2 cycles.  For a two-color image, we can always construct a permutation that resets in 2 cycles. We construct an \emph{involution}, a function that is its own inverse, using cycles of length two that transpose two robots. This technique does not extend to images with more than two colors.

\subsection{Physical Absolute Encoders and Animations}
As shown in Fig.~\ref{fig:MatrixPermuteAI}, a permutation gadget allows us to design a display that is hard-coded with a set of pictures.
A potential practical application uses these permutations as a physical absolute encoder or as a pseudo-random number generator. In an \emph{absolute encoder} the current arrangement of robots serves as a unique representation of how many rotations have taken place.
These applications exploit the fact that these physical permutations are cyclic, and that we can design the cycle length.  Applying the CW circular movements $\langle u,r,d,l\rangle$ in succession moves all the robots through one permutation.

The cycle length is the least common multiple of the permutation cycles in the transformation $A\mapsto B$.
Given $N$ robots, we want to partition the set of $k$ permutation cycles in such a way that the sum $\sum_{i=1}^k n_i = N$ and  maximizes $\mathrm{LCM}(n_1,n_2,\ldots,n_k)$.

This cycle length grows rapidly.  For instance,  using $N=100$ robots, we can partition the robots into cycles of length
\{2, 3, 5, 7, 11, 13, 17, 19, 23\}, see Fig. \ref{fig:MatrixPermuteEncoder}.  The $\mathrm{LCM}$ is 223,092,870. See \cite{Deleglise2012} for a more in-depth look at the growth of the maximum cycle length as a function of $N$.

\begin{figure}[t]

\subfloat[][\label{fig:PermutationCycleLength} Absolute encoder cycles ]{
\begin{overpic}[width=\columnwidth]{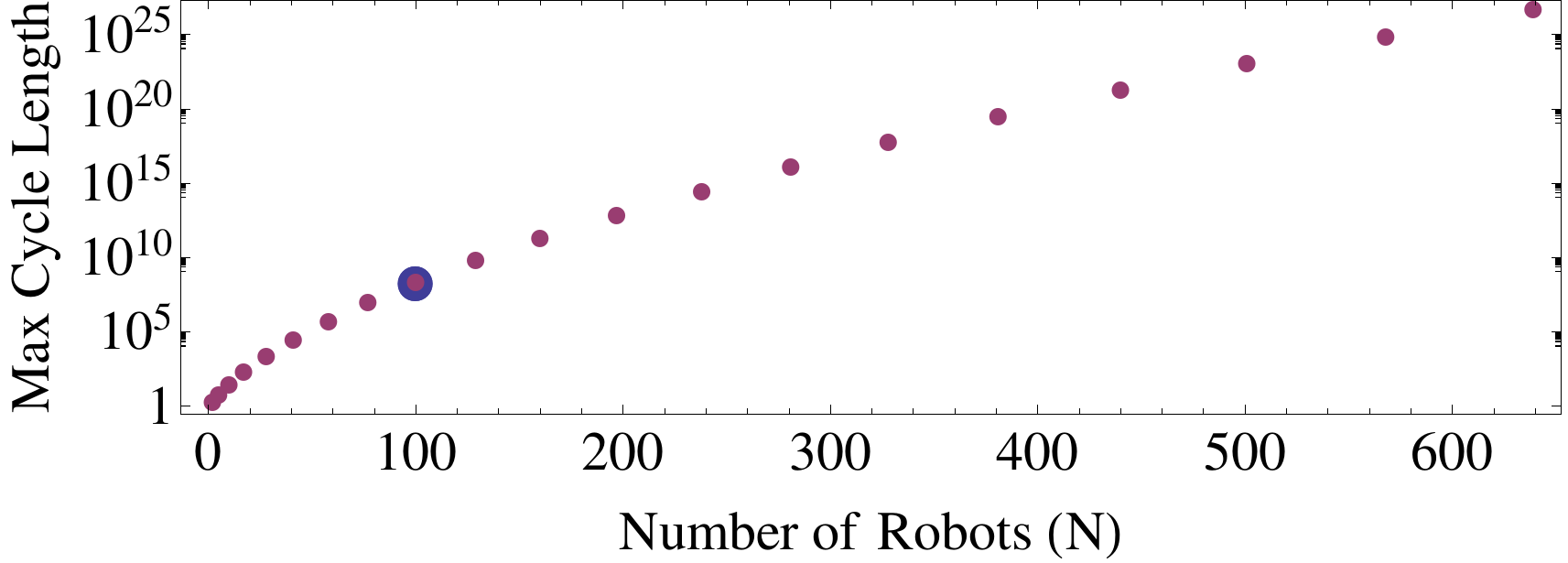}
\end{overpic}
}\\
\subfloat[][\label{fig:MatrixPermuteEncoder} Example encoder]{
\href{http://www.youtube.com/watch?v=eExZO0HrWRQ}{
\begin{overpic}[width=\columnwidth]{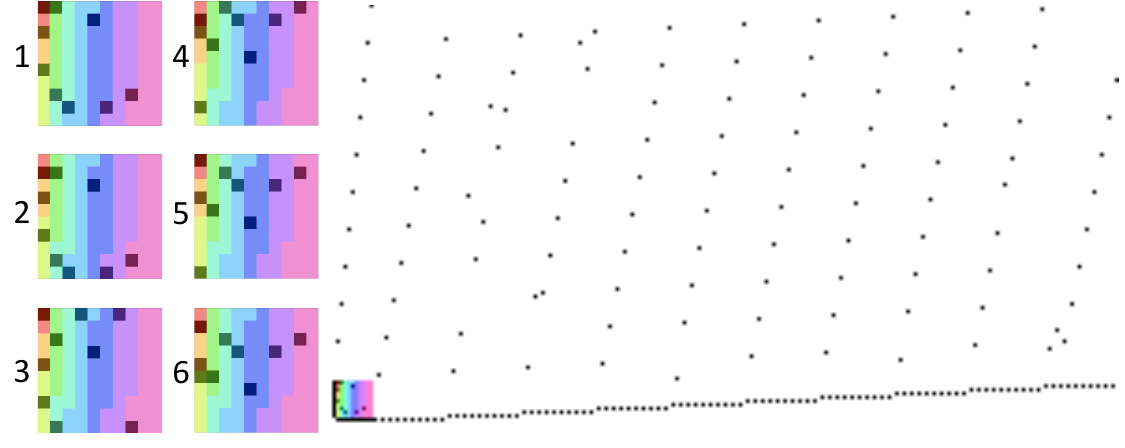}
\end{overpic}}
}
\caption{\label{fig:PermutationCycleLengthData}(a) Using a permutation gadget as an \emph{absolute encoder}. Cycle length increases rapidly as the number of robots increases, and the current arrangement of robots  uniquely represents how many rotations have taken place.
\href{http://www.youtube.com/watch?v=eExZO0HrWRQ}{(b)
An obstacle arrangement to permute a 10$\times$10 matrix in four moves $\langle u,r,d,l\rangle$, with a cycle length over 200 million.  The first 6 permutations are shown at left with each cycle a different color.}
}
\vspace{-1em}
\end{figure}

\subsubsection{Animations}
It would be useful if we could design permutations to generate sequences of images, e.g. $\langle$``R'', ``o'', ``b'', ``o'', ``t''$\rangle$. Surprisingly, there are  sequences of just three images that cannot be constructed with a single permutation. Consider the three 5-robot arrangements
$\square\square\blacksquare\blacksquare\blacksquare$,
$\blacksquare\square\blacksquare\square\blacksquare$,
$\blacksquare\blacksquare\square\blacksquare\square$.  Though permutations between any two exist, there is no single permutation that  can generate all three.
In fact, no single permutation can generate all possible permutations of the given robots.
For the example in Fig.~\ref{fig:MatrixPermuteEncoder}, with 100 robots, 9 painted black and the rest white, the maximum cycle length we can generate is of length $ \approx2\times10^8$, but for permutations of length $N$ with repeated elements $N_1, N_2,
\ldots$, the total number of permutations is
\[
\frac{N!}{N_1! N_2!\ldots N_k!}
\]
For the example above, there are $100!/(9!91!) \approx 2\times10^{12}$ permutations possible.

\subsubsection{Reversible Permutations}
The permutations generators shown in Fig.~\ref{fig:MatrixPermuteEncoder}  are one-way devices.  Attempting to drive them in reverse  $\langle l,d,r,u\rangle$ allows some robots to escape the obstacle region.  It is possible to insert additional obstacles to encode an arbitrary permutation when run in reverse, at a cost of $2N$ additional obstacles and requiring an area  in worst case $3N\times3N$ rather than $N\times2N$.  An example is shown in Fig.~\ref{fig:BubbleSort}. Here, we encode the base permutation $p=(1,2)$ in the CW direction $\langle u,r,d,l \rangle$ and $q=(1,2,\cdots N)$ in the CCW direction $\langle r,u,l,d \rangle$. Repeated application of these two base permutations can generate any permutation, when used in a manner similar to {\sc Bubble Sort}.

%
\begin{figure}[t]
\centering
\begin{overpic}[width=\columnwidth]{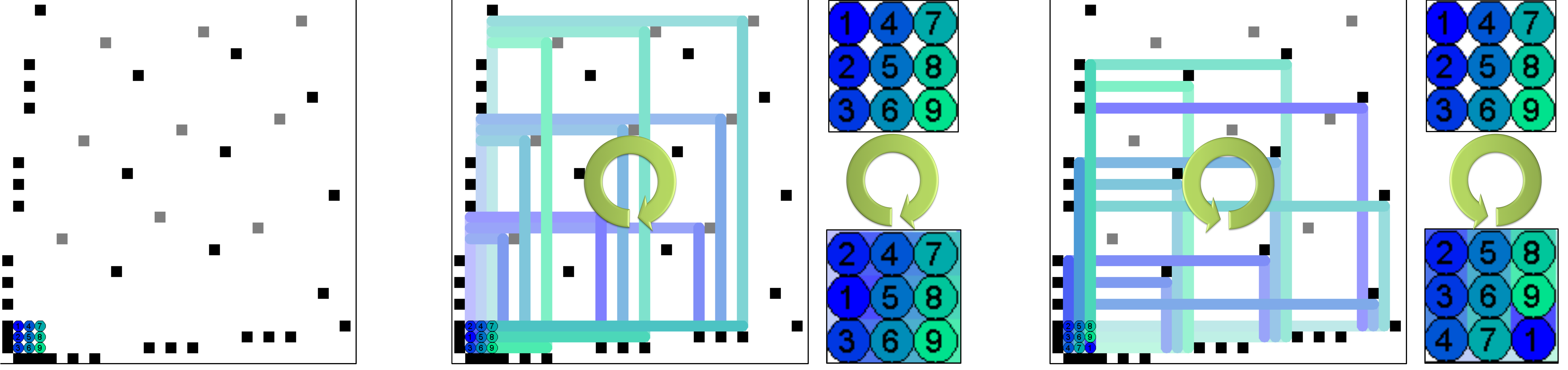}
\end{overpic}
\caption{\label{fig:BubbleSort}The obstacles above generate the base permutation $p=(1,2)$ in the CW direction $\langle u,r,d,l \rangle$ and $q=(1,2,\cdots N)$ in the CCW direction $\langle r,u,l,d \rangle$.
These can be applied repeatedly to {\sc Bubble Sort} the matrix and generate any desired permutation.
}
\vspace{-1em}
\end{figure}

\subsection{Designing a Workspace for Arbitrary Permutations}

There are various ways in which we can exploit Theorem~\ref{thm:ArbPermutationUsingObstacles} in order to generate larger sets
of (or even all) possible permutations. There is a tradeoff between the number
of introduced obstacles and the number of moves required for realizing a permutation. We quote these theorems from \cite{Becker2014a}, as they will be used in our PSPACE-proof. We start with obstacle sets that require only a few moves.

\begin{lemma}\label{lemma:TwoPermutationsGeneratesUniversalPermutations}
      Any permutation of $N$ objects can be generated by the
      two base permutations $p=(1,2)$ and $q=(1,2,\cdots N)$.
      Moreover, any permutation can be generated by a sequence
      of length at most $N^2$ that consists of $p$ and $q$.
\end{lemma}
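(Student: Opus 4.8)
The plan is to prove the two assertions separately: first that $\langle p, q\rangle = S_N$, and then the quantitative bound on word length. For generation, I would observe that conjugating the transposition $p=(1\,2)$ by powers of the full cycle $q=(1\,2\,\cdots\,N)$ slides it along the line, namely $q^{k} p\, q^{-k} = (1+k\ \ 2+k)$ with indices read cyclically, so from $p$ and $q$ I obtain every adjacent transposition $(i\ \ i+1)$ for $i=1,\ldots,N-1$. Since the adjacent transpositions are a standard generating set for $S_N$ — any permutation is sorted to the identity by repeatedly swapping out-of-order neighbors — it follows that $p$ and $q$ together generate the whole symmetric group.

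For the length bound I would deliberately \emph{not} expand each adjacent transposition independently: writing $(i\ \ i+1)=q^{i-1}p\,q^{-(i-1)}$ and using $q^{-1}=q^{N-1}$ costs about $N$ letters per transposition, and a permutation may need $\Theta(N^2)$ transpositions, giving a wasteful $\Theta(N^3)$ word. Instead I would realize the target permutation directly by \emph{sorting}, amortizing the rotations across swaps. The key primitive is ``optionally apply $p$, then apply $q$'': it inspects the two objects currently in positions $1$ and $2$, swaps them if desired, and then advances the entire array one step so that the next adjacent pair occupies positions $1,2$. Sweeping this window around the cycle realizes one bubble-sort pass, and $N-1$ such passes sort any arrangement; each pass costs at most $N$ rotations and at most $N$ conditional swaps, so a pass/inversion potential argument gives an $O(N^2)$ bound, and tightening the constant to the stated $N^2$ is where the real work lies.

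The step I expect to be the main obstacle is exactly this bookkeeping, because two features of our generators make it delicate. First, $q$ rotates in only one direction — a backward step $q^{-1}$ costs $N-1$ forward steps — so the sorting sweep must be organized to move the comparison window monotonically around the cycle and never backtrack. Second, the rotation is cyclic of length exactly $N$, so an object already placed correctly is carried back through the window on later passes and must be left untouched; equivalently, I must argue that the active region shrinks by one each pass, so that the rotations paid for on pass $k$ are charged against newly sorted elements rather than repeated wholesale. Handling the wraparound comparison (the pair straddling positions $N$ and $1$) so that it never undoes earlier progress, and then summing the per-pass costs as a telescoping/arithmetic series, is what yields the $\le N^2$ bound; I would finish by checking a worst case such as a single $N$-cycle or the full reversal to confirm the constant.
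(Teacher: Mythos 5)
Your proposal is correct and takes essentially the same approach as the paper: its proof is exactly your bubble-sort plan, consisting of two nested loops of $N$ in which each step performs $q$ once and $p$ ``when appropriate.'' The constant you worried about tightening is simply absorbed by the paper's counting convention---each composite step ($q$, optionally preceded by $p$) counts as one of the at most $N^2$ moves---and your conjugation argument $q^k p q^{-k} = (k+1,\; k+2)$ for generation is a standard addition that the paper does not bother to spell out.
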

\begin{proof}
      See Fig.~\ref{fig:BubbleSort}.
      Similar to {\sc Bubble Sort}, we use two nested loops of~$N$.
      Each move consists of performing $q$ once, and $p$ when appropriate.
\end{proof}

This allows us to establish the following result.

\begin{theorem}
\label{thm:NsqMovesToSort}
      We can construct a set of $O(N)$ obstacles such that
      any $n\times n$ arrangement of $N$ pixels can be rearranged into
      any other $n \times n$ arrangement $\pi$ of the same pixels, using
      at most $O(N^2)$ force-field moves.
\end{theorem}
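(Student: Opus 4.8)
The plan is to combine the single reversible permutation gadget of \fref{BubbleSort} with the generation result of Lemma~\ref{lemma:TwoPermutationsGeneratesUniversalPermutations}. That gadget is a \emph{fixed} obstacle set which realizes the transposition $p=(1,2)$ when driven clockwise ($\langle u,r,d,l\rangle$) and the full cycle $q=(1,2,\cdots,N)$ when driven counterclockwise ($\langle r,u,l,d\rangle$); as noted in the reversible-permutations discussion, this costs only $2N$ obstacles beyond the $4N+1$ of Theorem~\ref{thm:ArbPermutationUsingObstacles}, so the whole device uses $O(N)$ obstacles. First I would fix a linear indexing of the $n\times n$ matrix (for instance the bottom-to-top, left-to-right order from the proof of Theorem~\ref{thm:ArbPermutationUsingObstacles}), so that any target arrangement $\pi$ corresponds to a single permutation of the labels $1,\dots,N$.

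The main steps then follow in order. (i) Given the desired arrangement $\pi\in S_N$, invoke Lemma~\ref{lemma:TwoPermutationsGeneratesUniversalPermutations} to express $\pi$ as a word in $p$ and $q$ of length at most $N^2$. (ii) Realize that word on the hardware: each occurrence of $p$ is produced by one clockwise four-move cycle and each occurrence of $q$ by one counterclockwise four-move cycle. (iii) Because every four-move cycle returns the robots to the canonical $n\times n$ position (the spreading moves are undone by the reforming moves, exactly as in Theorem~\ref{thm:ArbPermutationUsingObstacles}), the cycles compose: the configuration after $k$ cycles is the product of the first $k$ letters applied to the start state, so driving the gadget through the whole word realizes $\pi$. (iv) Count the cost: at most $N^2$ letters, each costing $4$ force-field moves, gives at most $4N^2=O(N^2)$ moves, while the obstacle count is the fixed $O(N)$ above, independent of $\pi$.

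The step I expect to be the crux is (iii): verifying that the \emph{one} obstacle set genuinely implements $p$ in the clockwise direction and $q$ in the counterclockwise direction \emph{and} that both directions restore the canonical framing, so that arbitrarily long alternations of $p$ and $q$ can be chained without the matrix drifting or robots escaping the obstacle field. This is precisely the behavior asserted for the gadget of \fref{BubbleSort}; once it is in hand, the remainder is the routine substitution of Lemma~\ref{lemma:TwoPermutationsGeneratesUniversalPermutations} followed by the move-count bookkeeping in (iv). A secondary point worth checking is that the generating set $\{p,q\}=\{(1,2),(1,2,\cdots,N)\}$ acts on the chosen linear indexing, so that every spatial rearrangement of the grid is covered; this is immediate since $p$ and $q$ generate all of $S_N$.
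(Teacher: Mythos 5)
Your proposal is correct and matches the paper's (largely implicit) argument exactly: the paper also obtains Theorem~\ref{thm:NsqMovesToSort} by combining the $O(N)$-obstacle gadget of \fref{BubbleSort}, which realizes $p=(1,2)$ clockwise and $q=(1,2,\cdots,N)$ counterclockwise, with Lemma~\ref{lemma:TwoPermutationsGeneratesUniversalPermutations}'s length-$N^2$ word, at four force-field moves per letter. Your step (iii) about both cycles restoring the canonical framing is the same property the paper asserts for the reversible gadget, so nothing in your route differs from theirs.
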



\section{Complexity}\label{sec:pspace}

In previous work \cite{Becker2014a}, we showed that the problem {\sc GlobalControl-ManyRobots} is com\-pu\-ta\-tio\-nal\-ly intractable in a particular
sense: given an initial configuration of movable robots and fixed obstacles, it is
NP-hard to decide whether any robot can be moved to a specified location. It was left as an important open problem whether an even
stronger hardness result applies. In the following, we resolve this problem by proving PSPACE-completeness.

\begin{theorem}
  {\sc GlobalControl-ManyRobots} is PSPACE-complete:
  given an initial configuration of (labeled) movable robots and fixed obstacles, it is
  PSPACE-complete to compute a shortest sequence of force-field moves to achieve another (labeled) configuration.
\end{theorem}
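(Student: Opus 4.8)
The plan is to establish the two directions separately: containment in PSPACE and PSPACE-hardness. For containment I would work with the decision version: given a start configuration $C_0$, a target configuration $C_1$, and a bound $k$ written in binary, is $C_1$ reachable from $C_0$ in at most $k$ force-field moves? A configuration is simply the list of grid cells occupied by the $N$ labeled robots, which takes polynomial space, and applying one force-field move (sweeping every robot to its maximal extent in a fixed direction) is a deterministic polynomial-time operation once the obstacle map is given. Hence reachability is decidable nondeterministically in polynomial space: guess the moves one at a time, retaining only the current configuration together with a step counter. Since the number of distinct configurations is at most exponential, the number of bits of that counter is only polynomial, so the counter fits in polynomial space and bounds the search. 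By Savitch's theorem $\mathrm{NPSPACE}=\mathrm{PSPACE}$, which gives membership; computing a shortest sequence then reduces to binary search over $k$ (polynomially many queries, each in PSPACE), and conversely a shortest length is finite exactly when the target is reachable, so the shortest-sequence problem inherits the hardness proved below.

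For hardness I would reduce from a canonical PSPACE-complete problem, namely acceptance of an input $x$ by a Turing machine $M$ running in space polynomial in $|x|$ (equivalently, one could start from TQBF or from Nondeterministic Constraint Logic). The idea, building on the permutation machinery of Theorem~\ref{thm:NsqMovesToSort} and Lemma~\ref{lemma:TwoPermutationsGeneratesUniversalPermutations}, is to encode an entire configuration of $M$ — tape contents, head location, and control state — as the joint position of a group of robots, and to design one fixed obstacle field so that a short, canned block of global commands realizes exactly one transition step of $M$. Because the permutation gadgets route and rearrange groups of robots deterministically under the uniform signal, and the dual-rail logic gadgets of Section~\ref{sec:logic} compute the transition's Boolean predicate, a single macro-step of global moves can read the symbol under the head, consult the transition relation, rewrite the cell, and shift the head. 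Nondeterministic branching of $M$ is mirrored by the free choice of which global command to issue at the corresponding clock phase, so $M$ accepts $x$ if and only if the induced instance admits a global-move sequence reaching a designated accepting configuration; a shortest such sequence corresponds to a shortest accepting computation. The whole obstacle field has size polynomial in $|x|$ and is computable in polynomial time.

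The crux, and the step I expect to absorb most of the effort, is coping with the \emph{global} nature of the signal: every command moves all robots at once, so each gadget must be engineered so that a single command advances the one part of the computation that is currently active while simultaneously pinning every other robot against an obstacle, preserving the encoded state. The hard part will be keeping the many parallel gadgets phase-synchronized to a single global clock, preventing spurious collisions or robots escaping their chambers, and guaranteeing that every ``idle'' robot returns to its exact prior cell at the end of each macro-step so that successive steps compose cleanly. I would address this with dedicated parking and clocking sub-gadgets and with generous spacing, reusing the empty-column separation trick already employed in Theorem~\ref{thm:ArbPermutationUsingObstacles} to rule out unintended collisions, and would then verify a polynomial bound on the total footprint so that the reduction remains polynomial and the equivalence ``$M$ accepts $x$ $\iff$ target reachable'' holds exactly.
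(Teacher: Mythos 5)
Your PSPACE-membership argument (the decision version with a binary bound $k$, nondeterministic guessing of moves with a polynomial-size counter, then Savitch's theorem) is sound, and is actually more explicit than the paper, which never spells out containment. The genuine gap is in the hardness direction: what you give is a plan whose critical step is deferred, not solved. You acknowledge that the crux is engineering obstacle gadgets so that one macro-step of global commands executes a Turing-machine transition while every ``idle'' robot is pinned and returns exactly to its prior cell. With the machinery this paper actually provides, that step does not go through. The logic gadgets of Section~\ref{sec:logic} are conservative and one-shot: robots flow from input ports to output ports, there is provably no fan-out (the paper states this limitation explicitly), and there is no mechanism for resetting a gate under a global signal without disturbing all other robots. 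A polynomial-space machine may run for exponentially many steps, so your reduction must either reuse each transition gadget exponentially often (requiring a reset/recycle mechanism you have not constructed) or instantiate exponentially many copies of it (destroying polynomiality of the reduction). This is precisely the obstruction the paper itself concedes in its conclusions---conservative dual-rail logic forces exponentially growing numbers of inputs for iterated computation---so ``parking and clocking sub-gadgets and generous spacing'' will not suffice; the TM-simulation route would need fundamentally new gadgets whose existence is exactly what is in question.

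For contrast, the paper sidesteps machine simulation entirely by reducing from a different PSPACE-complete problem: Jerrum's theorem~\cite{j-cfmlg-85} that, for a permutation group given by just two generators $\pi_1,\pi_2$, computing a shortest expression of a target permutation $\pi$ in those generators is PSPACE-complete. The reduction then needs only the four-move permutation gadget of Theorem~\ref{thm:ArbPermutationUsingObstacles}: the obstacle sets for $\pi_1$ and $\pi_2$ are superimposed so that the clockwise quadruple $\langle u,r,d,l\rangle$ realizes $\pi_1$ and the counterclockwise quadruple $\langle r,u,l,d\rangle$ realizes $\pi_2$, and the only work left is showing that any shortest force-field sequence decomposes into such quadruples, giving $m=4n$. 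In other words, the computational universality is imported from Jerrum's group-theoretic result rather than built inside the grid, which is exactly why the paper's hardness proof is a page rather than a gadget zoo. To repair your write-up, either redirect the hardness direction through this reduction (your membership paragraph then completes the theorem), or be explicit that the feasibility of a direct TM simulation in this model is an open engineering problem, not a step you can assume.
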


\begin{proof}
The proof is largely based on a complexity result by Jerrum~\cite{j-cfmlg-85}, who considered the following problem:
Given a permutation group, specified by a set of generators, and a single target permutation $\pi$ which is
a member of the group, what is the shortest expression for the target permutation in terms of the generator? This problem
was shown in \cite{j-cfmlg-85} to be PSPACE-complete, even when the generator set consists of only two permutations, say, $\pi_1$ and $\pi_2$.

As shown in the previous Section~III, we can realize any matrix permutation $\pi_i$ of a square arrangement of
robots by a set of obstacles, such that this permutation $\pi_i$ is carried out by a quadruple of force-field moves.
We can combine the sets of obstacles for the two different permutations $\pi_1$ and $\pi_2$, such that $\pi_1$
is realized by going through a clockwise sequence $\langle u, r, d, l\rangle$, while $\pi_2$ is realized by a counterclockwise
sequence $\langle r, u, l, d\rangle$. We now argue that a target permutation $\pi$ of the matrix can be realized by
a minimum-length sequence of $m$ force-field moves, if and only if $\pi$ can be decomposed into a sequence of
a total of $n$ applications of permutations $\pi_1$ and $\pi_2$, where $m=4n$.

The ``if'' part is easy: simply carry out the sequence of $n$ permutations, each realized by a (clockwise or counterclockwise)
quadruple of force-field moves. For the ``only if'' part, suppose we have a shortest sequence of $m$ force-field moves to achieve permutation
$\pi$, and consider an arbitrary subsequence that starts from the base position in which the robots form a square arrangement
in the lower left-hand corner. It is easy to see that a minimum-length sequence cannot contain two consecutive moves that are
both horizontal or both vertical: these moves would have to be be in opposite directions, and we could shorten the sequence by omitting
the first move.
Furthermore, by construction of the obstacle set, the first move must be $u$ or $r$. Now it is easy to check that
the choice of the first move determines the next three ones: $u$ must be followed by $\langle r, d, l\rangle$; similarly,
$r$ must be followed by $\langle u, l, d\rangle$. Any other choice for moves 2--4 would produce a longer overall sequence,
or destroy the matrix by leading to an arrangement from which no recovery to a square matrix is possible. Therefore, the overall sequence
can be decomposed into $m=4n$ clockwise or counterclockwise quadruples. As described, each of these quadruples represents either
$\pi_1$ or $\pi_2$, so $\pi$ can be decomposed into $n$ applications of permutations $\pi_1$ and $\pi_2$.
This completes the proof.
\end{proof}

Note that the result also implies the existence of solutions of exponential length, which can occur with polynomial space.
Binary counters are particular examples of such long sequences that are useful for many purposes.


\section{Particle Logic}\label{sec:logic}

\begin{figure}[]
\centering
\subfloat[][\label{fig:VariableGadget1}
 $i=1$]
{\begin{overpic}[height=0.15\columnwidth]{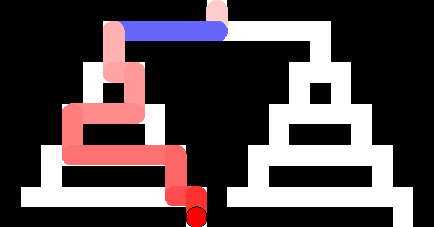}
\end{overpic}
}
\hspace{.1em}
\subfloat[][\label{fig:VariableGadget2}
  $i=2$]
{\begin{overpic}[height=0.15\columnwidth]{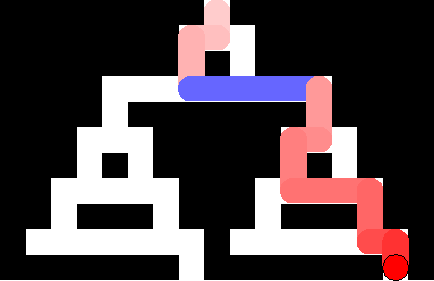}
\end{overpic}
}
\hspace{.1em}
\subfloat[][\label{fig:VariableGadget3}
  $i=3$]
{\begin{overpic}[height=0.15\columnwidth]{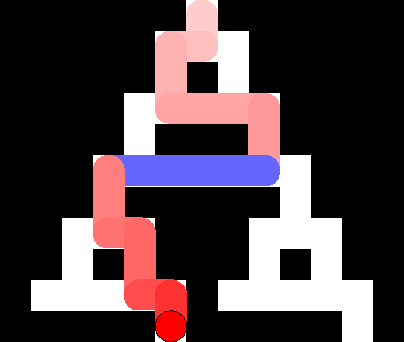}
\end{overpic}
}
\hspace{.1em}
\subfloat[][\label{fig:VariableGadget4}
  $i=4$]
{\begin{overpic}[height=0.15\columnwidth]{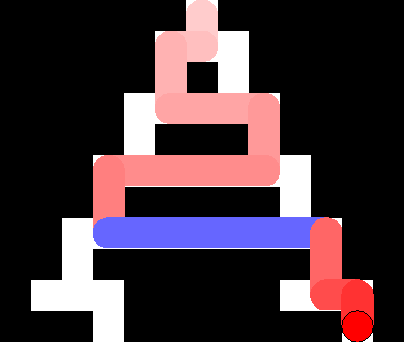}
\end{overpic}
}
\caption{
\label{fig:VariableGadget}
  Variable gadgets that execute by a sequence of $\langle d, l/r \rangle$ moves. The $i$th $l/r$ choice sets the variable to true or false by putting the robot in a separate column. This selection move is shown in blue. Each gadget responds to the $i$th choice but ignores all others, letting us make several copies of the same variable by making multiple gadgets with the same $i$. Above $n$=4, and the input  $\langle d,l,d,r,d,l,d,r,d,r,d\rangle$ causes $i=(1,2,3,4)$ to produce (true, false, true, false). Robots arrive at their output ports at exactly the same time. }
  \vspace{-1em}
\end{figure}

%
%
%

 In our previous work~\cite{Becker2014a}  we showed that with only fixed obstacles and robots that move maximally in response to an input, we can construct a variety of logic elements.  These include variable gadgets that enable setting multiple copies of up to $n$ variables to be true or false, (Fig.~\ref{fig:VariableGadget}),
   $m$-input {\sc or}, and {\sc and} gates. 
   Unfortunately, we cannot build  {\sc not} gates because our system of robots and obstacles is conservative---we cannot create a new robot at the output when no robot is supplied to the input. A  {\sc not}  gate is necessary to construct a logically complete set of gates.  To do this, we rely on a form of \emph{dual-rail logic}, where both the state  and inverse ($A$ and $\bar{A}$) of each signal are propagated throughout the computation.  Dual-rail logic is often used in low-power electronics to increase the signal to noise ratio without increasing the voltage \cite{zimmermann1997low}.  With dual-rail logic we can now construct the missing  {\sc not} gate, as shown in Figs.~\ref{fig:reversibleNOT} and \ref{fig:onewayNOT}. The command sequence $\langle d,l,d,r\rangle$ inverts the input.  By adding one-way valves we can ignore any superfluous commands.  Note that regardless of the command sequence, all robots arrive at their output ports at exactly the same time.

  \begin{figure}
\renewcommand{\figwid}{0.23\columnwidth}
\centering
\scriptsize
\subfloat[][\label{fig:reversibleNOT} 
\\  \centering reversible {\sc not}]
{
\begin{overpic}[width=\figwid]{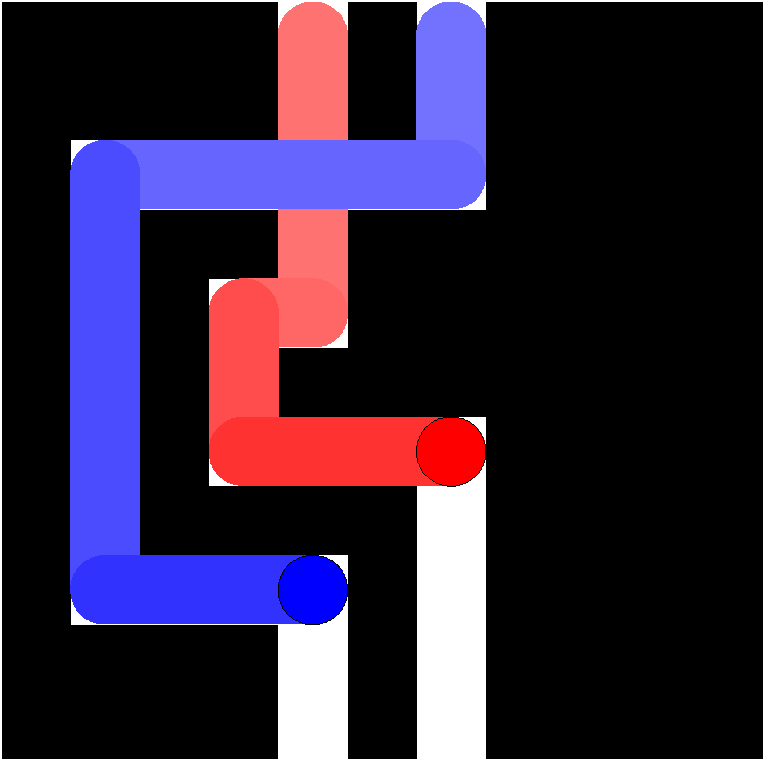}
\end{overpic}
}
\subfloat[][\label{fig:onewayNOT}
\\one-way  {\sc not}]
{
\begin{overpic}[width=\figwid]{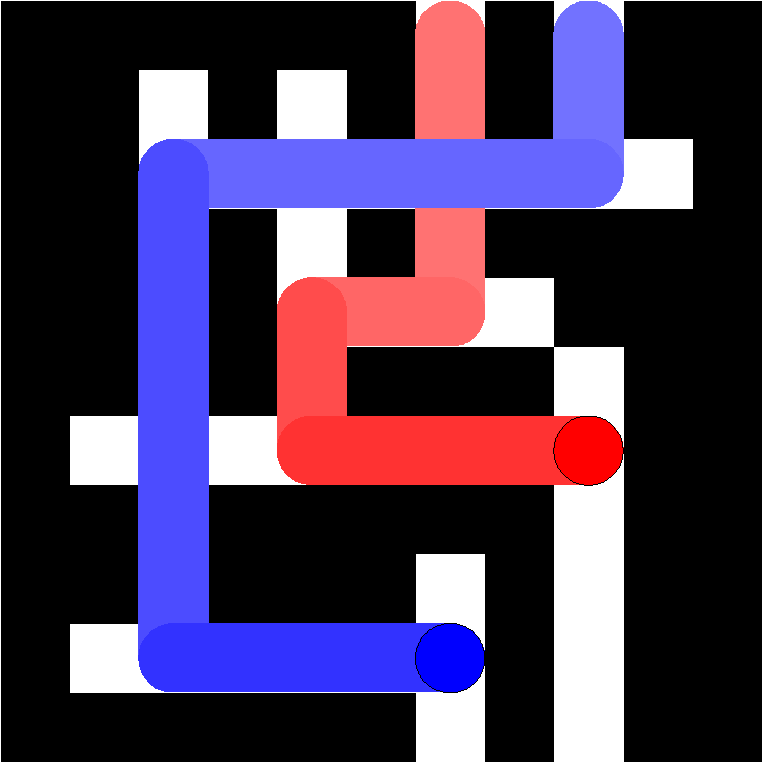}
\end{overpic}
}
\subfloat[][\label{fig:reversibleConnect} 
reversible\\ \centering connect]
{
\centering
\begin{overpic}[width=\figwid]{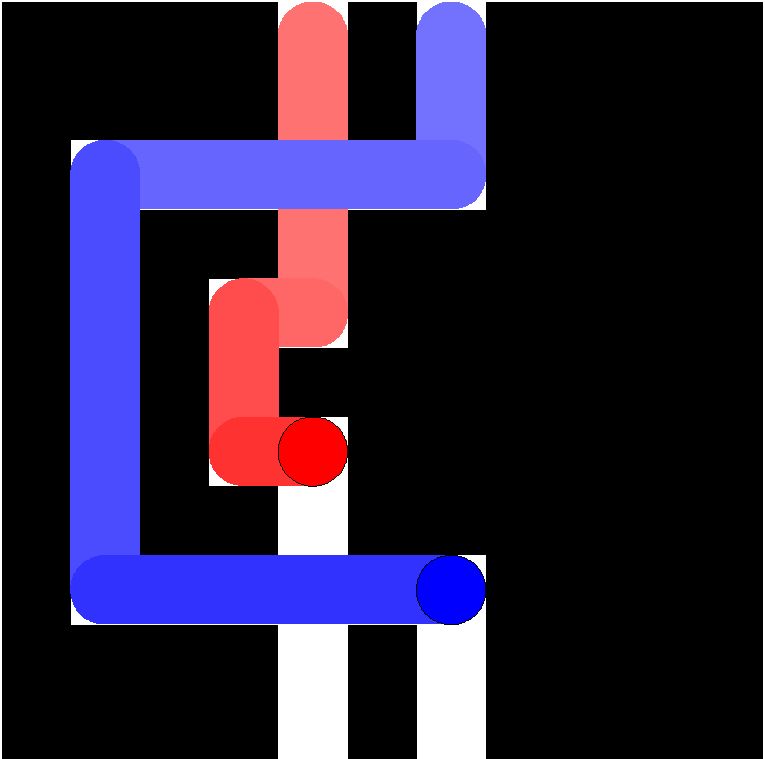}
\end{overpic}
}
\centering
\subfloat[][\label{fig:onewayConnect}
\\ \centering one-way connect]
{
\centering
\begin{overpic}[width=\figwid]{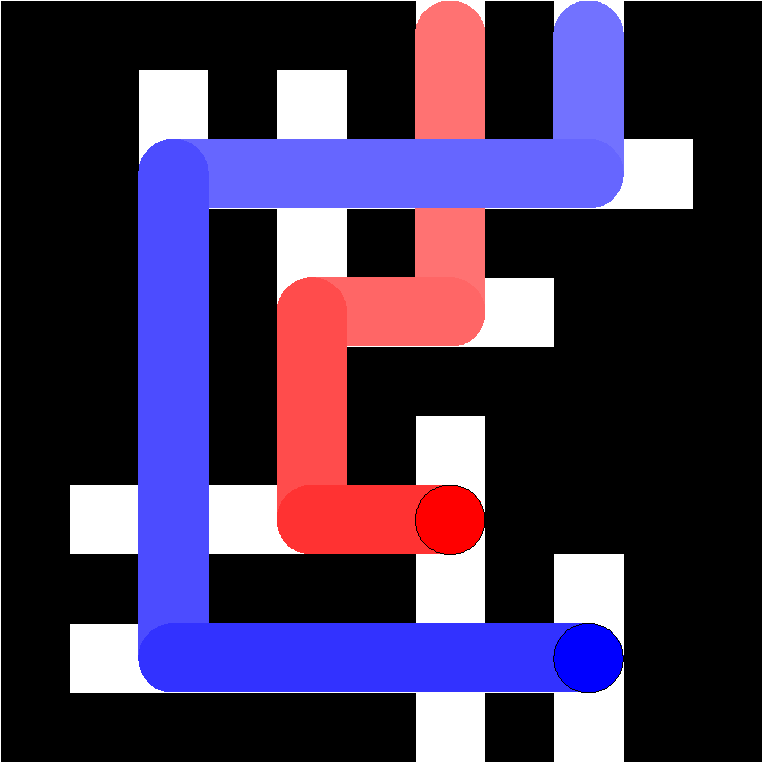}
\end{overpic}
}\\
\subfloat[][\label{fig:ANDdiagram}
 Dual-rail gadget with outputs \{{\sc and,nand,or,nor}\}.]
{\begin{overpic}[width=\columnwidth]{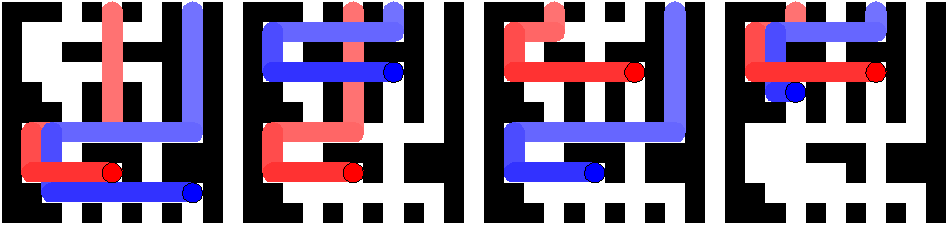}
\scriptsize
\put(7,26){A=0,~~B=0}
\put(7,-1){$0,~1,~0,~1$}
\put(32,26){A=0,~~B=1}
\put(32,-1){$0,~1,~1,~0$}
\put(57.8,26){A=1,~~B=0}
\put(57.8,-1){$0,~1,~1,~0$}
\put(83,26){A=1,~~B=1}
\put(83,-1){$1,~0,~1,~0$}
\end{overpic}
}\\
\subfloat[][\label{fig:XORdiagram}  Dual-rail gadget with outputs \{{\sc xor}(AB), {\sc xnor}(AB), 1\}. ]
{\begin{overpic}[width=\columnwidth]{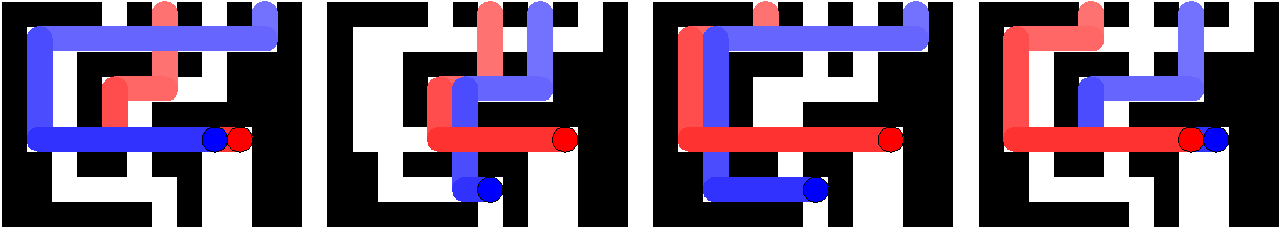}
\scriptsize
\put(7.8,20){A=0,~~B=0}
\put(11.8,-1){$0,1,1$}
\put(33,20){A=0,~~B=1}
\put(37.8,-1){$1,0,1$}
\put(58.8,20){A=1,~~B=0}
\put(62.8,-1){$1,0,1$}
\put(84,20){A=1,~~B=1}
\put(88.8,-1){$0,1,1$}
\end{overpic}
}
\caption{\label{fig:XORandMULTIdiagram}
 Dual-rail gadgets using cycle $\langle d,l,d,r\rangle$.}
 \vspace{-2em}
\end{figure}

 We now revisit the  {\sc or} and {\sc and} gates of~\cite{Becker2014a} using dual-rail logic and the four inputs $A,\bar{A},B,\bar{B}$.  Surprisingly,  with the gate in Fig.~\ref{fig:ANDdiagram} we can simultaneously compute  {\sc and, nand, or} and {\sc nor}.  using the same  command sequence $\langle d,l,d,r\rangle$ as the {\sc not} gate.  Outputs can be piped for further logic using the interconnections in Figs.~\ref{fig:reversibleConnect} and \ref{fig:onewayConnect}. Unused outputs can be piped into a storage area and recycled for later use.

These gates are reminiscent of the Fredkin gate, a three-bit gate that swaps the last two bits if the first bit is 1~\cite{Fredkin1982ConservativeLogic}.
  They are conservative, in that the number of input and output 1's and 0's are unchanged.  They also form a universal set. Unlike the Fredkin gate, our gate is kinematic rather than dynamic, making it robust to noise and self-synchronizing -- at the end of every move the robots are in a known state, and will not move until we apply another input.  However, unlike the Fredkin gate, our {\sc and/nand/or/nor} gate is not reversible.

  Dual-rail devices open up new opportunities, including  {\sc xor} and {\sc xnor} gates, which are not conservative using single-rail logic.  This gate, shown in~Fig.~\ref{fig:ParticleLogic11} also outputs a constant 1 and 0.

  With an {\sc and} and {\sc xor} we can compactly construct a half-adder.  We are hindered by an inability to construct a fan-out device that produces multiple copies of an input.  Instead, we must take any logical expression and create multiple copies of each input.  For example, a half-adder requires only one {\sc xor} and one  {\sc and} gate, but our particle computation requires two A  and two B inputs.


To make our gate robust to input sequences that deviate from $\langle d, l,d,r \rangle$, we can create caves that act as one-way valves, as shown in Figs.~\ref{fig:onewayNOT} and \ref{fig:onewayConnect}.  After an $l$ input, the robot is at the left end of a horizontal corridor.  By placing a 1-unit cave at the rightmost end of the corridor we can latch the $l$ input---moving right inserts the robot into a cave that can only be exited by an $l$ input.  Similarly, by placing a 1-unit cave above the leftmost end of the corridor, a $u$ input inserts the robot into a cave that can only be exited by an $l$ input.
%
%

\section{hardware demonstrations}\label{sec:hardware}

Fig.~\ref{fig:HardwarePlatformMedium} shows our scale prototype of a reconfigurable  {\sc GlobalControl-ManyRobots} environment, using 1.27 diam steel and nylon bearings as our robots and a naturally-occuring gravity field as the control field.  The prototype is a 61$\times$61 cm square sheet of 2 cm thick medium-density fiberboard (MDF), with a lattice grid of hemispherical-profile, 1.27 cm grooves milled at 1.27 cm spacing in the $x$ and $y$ directions.  At the intersection of each set of orthogonal grooves is a 4 mm diameter hole.  We can then insert plastic-headed thumb screws with 1.27 cm diam heads \href{http://www.mcmaster.com/#91185a309}{(McMaster \#91185A444)} to serve as obstacles.  The prototype is centered and glued on top of a 20$\times$20 cm square section of MDF.  Pushing down on any  top board edge tilts the entire prototype $u,r,l,$ or $d$, and the bearings roll until they hit an obstacle or another bearing.  \href{http://youtu.be/mJWl-Pgfos0}{The companion video illustrates this prototype configured to create a permutation that converts `A' to `b' under the command sequence $\langle u,r,d,l\rangle$}, also shown in Fig.~\ref{fig:MatrixPermuteAI}.

We have also configured the prototype to generate the dual-rail universal Boolean gate in Fig.~\ref{fig:ParticleLogic11} and the logical interconnects of Fig.~\ref{fig:reversibleConnect},  \href{http://youtu.be/mJWl-Pgfos0}{see the accompanying video}.  The long open paths in the permutation arrangement often lead to errors when bearings pop off their proper paths.  The enclosed mazes of the logic gates are more reliable and we have not recorded any errors.

\begin{figure}
\begin{overpic}[width=\columnwidth]{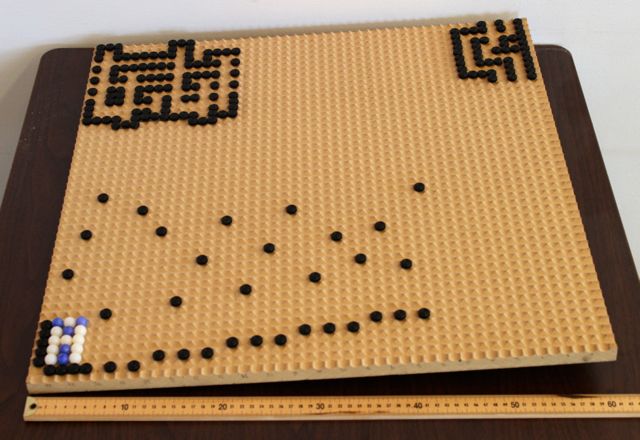}
\end{overpic}
\caption{\label{fig:HardwarePlatformMedium}
Gravity-fed hardware implementation of  {\sc GlobalControl-ManyRobots}.  Bottom left is a matrix permutation for changing `A' to `b', top left is a combination {\sc and, nand, or, nor} gate, and top right is a {\sc not} gate. \href{http://youtu.be/mJWl-Pgfos0}{See \url{http://youtu.be/mJWl-Pgfos0}.} }
 \vspace{-1em}
\end{figure}


\section{Conclusions}\label{sec:conclusions}

We analyzed the problem of steering many robots with uniform inputs in a {\sc 2d} environment containing obstacles. We introduced dual-rail particle logic computation, and designed environments that can efficiently perform matrix operations on groups of robots in parallel---our matrix permutation requires only four moves for any number of robots. These matrix operations enabled us to prove the general motion planning problem PSPACE-complete.

  There remain many interesting problems to solve. We are motivated by
practical challenges in steering micro-robots through vascular networks,  which
are common in biology. Though some are two-dimensional, including the leaf
example in Fig.~\ref{fig:vascularNetwork} and endothelial networks on the
surface of organs, many of these networks are three dimensional.
Magnetically actuated systems are capable of providing 3D control inputs, but
control design poses additional challenges.

   The paper investigated a subset of control in which all robots move
maximally. Future work should investigate more general motion---what happens
to our complexity proof if we can move all the robots a discrete distance, or
along an arbitrary curve? We also abstracted important practical constraints e.g.,
ferromagnetic objects tend to clump in a magnetic field, and most magnetic fields are not perfectly uniform.

Using \emph{dual-rail logic}, we are limited to conservative logic. We cannot
create new robots, so logic such as a multi-bit adder require exponentially
increasing numbers of inputs. Generating fan-out gates seems to require
additional flexibility in our problem definition, because conservation rules are
violated.  Some way of encoding an order of precedence is needed so that a
reversible operation on robot $a$ can affect robot $b$.  Possible approaches use non-unit size components--either 2$\times$1 robots, or 0.5$\times$1 obstacles.

Finally, our research has potential applications in micro-construction and
nano-assembly.  These applications require additional theoretical analysis to
model heterogeneous objects and objects that bond when forced together, e.g.,
MEMS components and molecular chains.



\section*{Acknowledgments}
We acknowledge the helpful discussion and motivating experimental efforts with \emph{T. pyriformis} cells by Yan Ou and Agung Julius at RPI and Paul Kim and MinJun Kim at Drexel University.  Ricardo Marquez and Artie Shen assisted with photography and the hardware platform.
This work was supported by the National Science Foundation under 
\href{http://www.nsf.gov/awardsearch/showAward?AWD_ID=1035716}{CPS-1035716}.   


\bibliographystyle{IEEEtran}
\bibliography{IEEEabrv,tilt}

\end{document}